\newif\ifJOURNAL
\newif\ifCONF
\newif\ifarXiv
\newif\ifWP
\newif\ifFULL
\newif\ifnotCONF  %
\newif\ifnotarXiv  %
\newif\ifTR  %
\newif\ifnotTR
  \newcommand{\OCMOSIX}{vovk/etal:2004NIPS}
  \newcommand{\OCMOSIX}{vovk/etal:2004NIPS}
  \newcommand{\OCMOSIX}{OCM9OS}
\newlength{\picturewidth}
  \newcommand{\Extra}[1]{}
  \newcommand{\Extra}[1]{}
  \newcommand{\Extra}[1]{}
  \newcommand{\zzrelax}[1]{}
  \renewcommand{\Extra}[1]{\red{#1}}
  \newcommand{\red}[1]{\textcolor{red}{#1}}
  \newcommand{\bluebegin}{\begingroup\color{blue}}
  \newcommand{\blueend}{\endgroup}
\DeclareMathOperator{\Prob}{\mathbb{P}}   %
\DeclareMathOperator{\Expect}{\mathbb{E}} %
\DeclareMathOperator{\conv}{conv}         %
\DeclareMathOperator{\sq}{sq}             %
\newcommand{\st}{\mathrel{\!|\!}}
\newcommand{\givn}{\mathrel{|}}
\newcommand{\bbbr}{\mathbb{R}}     %
  \newtheorem{lemma}{Lemma}
  \newtheorem{proposition}{Proposition}
  \newtheorem{corollary}{Corollary}
  \newtheorem{theorem}{Theorem}
  \theoremstyle{definition}
  \newtheorem*{remark}{Remark}
  \title{Venn--Abers Predictors}
  \author{\textbf{Vladimir Vovk} and \textbf{Ivan Petej}\\
  Department of Computer Science\\
  Royal Holloway, University of London\\
  Egham, Surrey, UK}
  \title{Venn--Abers Predictors\thanks{To appear in the Proceedings of UAI 2014.}}
  \author{Vladimir Vovk and Ivan Petej\\
  \texttt{v.vovk{\rm@}rhul.ac.uk, ivan.petej{\rm@}gmail.com}}
  \title{Venn--Abers Predictors}
  \author{Vladimir Vovk and Ivan Petej}
\begin{document}
\maketitle

\begin{abstract}
  This paper continues study, both theoretical and empirical,
  of the method of Venn prediction,
  concentrating on binary prediction problems.
  Venn predictors produce probability-type predictions
  for the labels of test objects
  which are guaranteed to be well calibrated
  under the standard assumption that the observations
  are generated independently from the same distribution.
  We give a simple formalization and proof of this property.
  We also introduce Venn--Abers predictors,
  a new class of Venn predictors based on the idea of isotonic regression,
  and report promising empirical results both for Venn--Abers predictors
  and for their more computationally efficient simplified version.
\end{abstract}

\section{Introduction}      %
\label{sec:introduction}

Venn predictors were introduced in \cite{\OCMOSIX}
and are discussed in detail in \cite{vovk/etal:2005book}, Chapter 6,
but to make the paper self-contained we define them in Section~\ref{sec:VP}.
This section also states the important property of validity of Venn predictors:
they are automatically well calibrated.
In some form this property of validity has been known:
see, e.g., \cite{vovk/etal:2005book}, Theorem~6.6.
However, this known version is complicated,
whereas our version (Theorem~\ref{thm:validity} below) is much simpler
and the intuition behind it is more transparent.
In the same section we show (Theorem~\ref{thm:uniqueness})
that Venn prediction is essentially the only way to achieve
our new property of validity.

Section~\ref{sec:VAP} defines a natural class of Venn predictors,
which we call Venn--Abers predictors
(with the ``Abers'' part formed by the initial letters of the authors' surnames
of the paper \cite{ayer/etal:1955} introducing the underlying technique).
The latter are defined on top of a wide class of classification algorithms,
which we call ``scoring classifiers'' in this paper;
each scoring classifier can be automatically transformed into a Venn--Abers predictor,
and we refer to this transformation as the ``Venn--Abers method''.
Because of its theoretical guarantees,
this method can be used for improving the calibration of probabilistic predictions.

The definition of Venn--Abers predictors was prompted by \cite{lambrou/etal:2012},
which demonstrated that the method of calibrating probabilistic predictions
introduced by Zadrozny and Elkan in \cite{zadrozny/elkan:2002}
(an adaptation of the isotonic regression procedure of \cite{ayer/etal:1955})
does not always achieve its goal and sometimes leads to poorly calibrated predictions.
Another paper reporting the possibility for the Zadrozny--Elkan method
to produce grossly miscalibrated predictions is \cite{jiang/etal:2011}.
The Venn--Abers method is a simple modification of Zadrozny and Elkan's method;
being a special case of Venn prediction,
it overcomes the problem of potentially poor calibration.

Theorem~\ref{thm:validity} in Section~\ref{sec:VP}
says that Venn predictors are perfectly calibrated.
The price to pay, however,
is that Venn predictors are multiprobabilistic predictors,
in the sense of issuing a set of probabilistic predictions
instead of a single probabilistic prediction;
intuitively, the diameter of this set reflects the uncertainty of our prediction.
In Section~\ref{sec:experiments}
we explore the efficiency of Venn--Abers predictors empirically
using the fundamental log loss function and another popular loss function,
square loss.
To apply these loss functions, we need, however,
probabilistic predictions rather than multiprobabilistic predictions,
and in Section~\ref{sec:minimax} we define natural minimax ways
of replacing the latter with the former.

In Section~\ref{sec:experiments} we explore the empirical predictive performance
of the most natural version of the original Zadrozny--Elkan method,
the Venn--Abers method, and the latter's simplified version,
which is not only simpler but also more efficient computationally.
We use nine benchmark data sets from the UCI repository \cite{UCI:data}
and six standard scoring classifiers,
and for each combination of a data set and classifier
evaluate the predictive performance of each method.
Our results show that the Venn--Abers and simplified Venn--Abers methods
usually improve the performance of the underlying classifiers,
and in our experiments they work better than the original Zadrozny--Elkan method.

Interestingly, the predictive performance of the simplified Venn--Abers method
is slightly better than that of the Venn--Abers method on the chosen data sets and scoring classifiers;
for example, in the case of the log loss function,
the best performance is achieved by the simplified Venn--Abers methods
for seven data sets out of the nine.
If these results are confirmed in wider empirical studies,
the simplified Venn--Abers method is preferred
since it achieves both computational and predictive efficiency.

Our empirical study in Section~\ref{sec:experiments} does not mean
that we recommend that the multiprobabilistic predictions output by Venn--Abers
(and more generally Venn) predictors
be replaced by probabilistic predictions
(e.g., using the formulas of Section~\ref{sec:minimax}).
On the contrary, we believe that the size of a multiprobabilistic prediction
carries valuable information about the uncertainty of the prediction.
The only purpose of replacing multiprobabilistic by probabilistic predictions
is to facilitate comparison of various prediction algorithms
using well-established loss functions.

\section{Venn predictors}
\label{sec:VP}

We consider \emph{observations} $z=(x,y)$
consisting of two components: an \emph{object} $x\in\mathbf{X}$ and its \emph{label} $y\in\mathbf{Y}$.
In this paper we are only interested in the binary case
and for concreteness set $\mathbf{Y}:=\{0,1\}$.
We assume that $\mathbf{X}$ is a measurable space,
so that observations are elements of the measurable space
that is the Cartesian product
$\mathbf{Z}:=\mathbf{X}\times\mathbf{Y}=\mathbf{X}\times\{0,1\}$.

A \emph{Venn taxonomy} $A$ is a measurable function
that assigns to each $n\in\{2,3,\ldots\}$
and each sequence $(z_1,\ldots,z_n)\in\mathbf{Z}^n$
an equivalence relation $\sim$ on $\{1,\ldots,n\}$ which is equivariant in the sense that,
for each $n$ and each permutation $\pi$ of $\{1,\ldots,n\}$,
$$
  (i\sim j \mid z_1,\ldots,z_n)
  \Longrightarrow
  (\pi(i)\sim\pi(j)\mid z_{\pi(1)},\ldots,z_{\pi(n)}),
$$
where the notation $(i\sim j \mid z_1,\ldots,z_n)$ means that $i$ is equivalent to $j$
under the relation assigned by $A$ to $(z_1,\ldots,z_n)$.
The measurability of $A$ means that for all $n$, $i$, and $j$ the set
$\{(z_1,\ldots,z_n)\st(i\sim j\mid z_1,\ldots,z_n)\}$
is measurable.
Define
\begin{equation*} %
  A(j \mid z_1,\ldots,z_n) %
  :=
  \left\{
    i\in\{1,\ldots,n\}
    \mid
    (i\sim j \mid z_1,\ldots,z_n)
  \right\}
\end{equation*}
to be the equivalence class of $j$.
Let $(z_1,\ldots,z_l)$ be a training sequence of observations $z_i=(x_i,y_i)$, $i=1,\ldots,l$,
and $x$ be a test object.
The \emph{Venn predictor} associated with a given Venn taxonomy $A$
outputs the pair $(p_0,p_1)$ as its prediction for $x$'s label,
where
$$
  p_y
  :=
  \frac
  {\left|\left\{i\in A(l+1\mid z_1,\ldots,z_l,(x,y))\mid y_i=1\right\}\right|}
  {\left|A(l+1\mid z_1,\ldots,z_l,(x,y))\right|}
$$
for both $y\in\{0,1\}$
(notice that the denominator is always positive).
Intuitively, $p_0$ and $p_1$ are the predicted probabilities that the label of $x$ is $1$;
of course, the prediction is useful only when $p_0\approx p_1$.
The \emph{probability interval} output by a Venn predictor
is defined to be the convex hull $\conv(p_0,p_1)$ of the set $\{p_0,p_1\}$;
we will sometimes refer to the pair $(p_0,p_1)$ or the set $\{p_0,p_1\}$
as the \emph{multiprobabilistic prediction}.

\subsection*{Validity of Venn predictors}

Let us say that a random variable $P$ taking values in $[0,1]$
is \emph{perfectly calibrated} for a random variable $Y$ taking values in $\{0,1\}$
if
\begin{equation}\label{eq:calibration}
  \Expect(Y\givn P)
  =
  P
  \qquad
  \text{a.s.}
\end{equation}
Intuitively, $P$ is the prediction made by a probabilistic predictor for $Y$,
and perfect calibration means that the probabilistic predictor gets the probabilities right,
at least on average,
for each value of the prediction.
A probabilistic predictor for $Y$ whose prediction $P$
satisfies (\ref{eq:calibration}) with an approximate equality
is said to be well calibrated \cite{dawid:ESS2006PF},
or unbiased in the small \cite{murphy/epstein:1967,dawid:ESS2006PF};
this terminology will be used only in informal discussions, of course.

A \emph{selector} is a random variable taking values $0$ or $1$.

\begin{theorem}\label{thm:validity}
  Let $(X_1,Y_1),\ldots,(X_l,Y_l),(X,Y)$ be IID
  (independent identically distributed) random observations.
  Fix a Venn predictor $V$ and an $l\in\{1,2,\ldots\}$.
  Let $(P_0,P_1)$ be the output of $V$ given $(X_1,Y_1,\ldots,X_l,Y_l)$
  as the training set and $X$ as the test object.
  There exists a selector $S$ such that $P_S$
  is perfectly calibrated for $Y$.
\end{theorem}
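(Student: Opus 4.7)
The plan is to take the selector $S := Y$ itself, which is trivially $\{0,1\}$-valued, and verify perfect calibration of $P_Y$ for $Y$ via the equivalent identity $\Expect[Y f(P_Y)] = \Expect[P_Y f(P_Y)]$ for every bounded measurable $f:[0,1]\to\bbbr$. Writing $Z_{l+1} := (X,Y)$, observe that by construction $P_Y$ is the fraction of $1$-labels in the equivalence class of index $l+1$ produced by applying the Venn taxonomy to the full sequence $(Z_1,\ldots,Z_{l+1})$. The proof is then an exchangeability symmetrization combined with a short combinatorial identity coming from equivariance.

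In more detail, let $K_k$ denote the equivalence class of index $k$ in the partition of $\{1,\ldots,l+1\}$ produced by the taxonomy on $(Z_1,\ldots,Z_{l+1})$, and set $p_C := |\{i \in C : Y_i = 1\}|/|C|$ for a class $C$. Equivariance implies that if one instead designates $Z_k$ as the test observation (forming the sequence $(Z_1,\ldots,Z_{k-1},Z_{k+1},\ldots,Z_{l+1},Z_k)$ and applying the taxonomy) and evaluates the Venn prediction at the true label $Y_k$, then the resulting value is exactly $p_{K_k}$. The IID hypothesis makes $(Z_1,\ldots,Z_{l+1})$ exchangeable, so each of the $l+1$ possible choices of test index gives the same expectation as the original; averaging over $k$,
$$
  \Expect\bigl[Y f(P_Y)\bigr]
  =
  \frac{1}{l+1}\Expect\!\left[\sum_{k=1}^{l+1} Y_k\, f(p_{K_k})\right],
$$
and analogously with $Y_k$ replaced by $p_{K_k}$ on the right. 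Grouping the inner sum by equivalence class $C$ collapses $\sum_{k\in C} Y_k$ to $|C|\,p_C$, so both sums equal $\sum_C |C|\, p_C\, f(p_C)$, and the two expectations agree.

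The step requiring the most care is the equivariance identity $P^{(k)} = p_{K_k}$: one must track the permutation that moves $Z_k$ into the final position and verify that the equivariance axiom of the Venn taxonomy exhibits exactly the multiset of labels $\{Y_i : i \in K_k\}$ inside the test's equivalence class. Once this bookkeeping is in place, the rest of the argument is the combinatorial rearrangement above, and no probabilistic subtleties beyond exchangeability of the IID sample arise.
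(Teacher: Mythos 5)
Your proposal is correct and takes essentially the same route as the paper: the selector is $S:=Y$, and the heart of the argument is the combinatorial fact that within each taxonomy cell the average of the labels equals that cell's fraction of $1$s. Where the paper conditions on the bag $\lbag Z_1,\ldots,Z_{l+1}\rbag$ so that the test point becomes a uniformly random element of it, you make the same symmetrization explicit via exchangeability over the $l+1$ choices of test index together with the test-function characterization $\Expect[Y f(P_Y)]=\Expect[P_Y f(P_Y)]$ — a more detailed rendering of the identical argument.
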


Intuitively, at least one of the two probabilities output by the Venn predictor
is perfectly calibrated.
Therefore, if the two probabilities tend to be close to each other,
we expect them (or, say, their average) to be well calibrated.

In the proof of Theorem~\ref{thm:validity} and later in the paper
we will use the notation $\lbag a_1,\ldots,a_n\rbag$ for bags (in other words, multisets);
the cardinality of the set $\{a_1,\ldots,a_n\}$ might well be smaller than $n$
(because of the removal of all duplicates in the bag).
Intuitively, $\lbag a_1,\ldots,a_n\rbag$ is the sequence $(a_1,\ldots,a_n)$
with its ordering forgotten.
We will sometimes refer to the bag $\lbag z_1,\ldots,z_l\rbag$,
where $(z_1,\ldots,z_l)$ is the training sequence,
as the training set (although technically it is a multiset rather than a set).

\begin{proof}[Proof of Theorem~\ref{thm:validity}]
  Take $S:=Y$ as the selector.
  Let us check that~(\ref{eq:calibration}) is true
  even if we further condition on the observed bag
  $\lbag (X_1,Y_1),\ldots,(X_l,Y_l),(X,Y)\rbag$
  (so that the remaining randomness consists in generating
  a random permutation of this bag).
  We only need to check the equality
  $
    \Expect(Y\givn P=p)
    =
    p
  $,
  where $P$ is the average of $1$s in the equivalence class containing $(X,Y)$,
  for the $p$s which are the percentages of $1$s in various equivalence classes
  (further conditioning on the observed bag
  is not reflected in our notation).
  For each such $p$,
  $\Expect(Y\givn P=p)$ is the average of $1$s in the equivalence classes
  for which the average of $1$s is $p$;
  therefore, we indeed have $\Expect(Y\givn P=p)=p$.
\end{proof}

The following simple corollary of Theorem~\ref{thm:validity}
gives a weaker property of validity,
which is sometimes called ``unbiasedness in the large''
\cite{murphy/epstein:1967,dawid:ESS2006PF}.
\begin{corollary}\label{cor:validity}
  For any Venn predictor $V$ and any $l=1,2,\ldots$,
  \begin{equation}\label{eq:validity}  %
    \Prob(Y=1)
    \in
    \bigl[
      \Expect
      \left(
        \underline{V}(X;X_1,Y_1,\ldots,X_l,Y_l)
      \right), %
      \Expect
      \left(
        \overline{V}(X;X_1,Y_1,\ldots,X_l,Y_l)
      \right)
    \bigr],
  \end{equation}
  where $(X_1,Y_1),\ldots,(X_l,Y_l),(X,Y)$ are IID observations
  and $[\underline{V}(\ldots),\overline{V}(\ldots)]$ is the probability interval produced by $V$
  for the test object $X$ based on the training sequence $(X_1,Y_1,\ldots,X_l,Y_l)$.
\end{corollary}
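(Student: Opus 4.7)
The plan is to deduce this directly from Theorem~\ref{thm:validity} by integrating the almost-sure calibration identity and then sandwiching the selected prediction between the endpoints of the probability interval.

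First I would invoke Theorem~\ref{thm:validity} to obtain a selector $S\in\{0,1\}$ such that $P_S$ is perfectly calibrated for $Y$, i.e.\ $\Expect(Y\givn P_S)=P_S$ almost surely. Taking total expectations and using that $Y$ is $\{0,1\}$-valued gives
\[
  \Prob(Y=1)
  =
  \Expect(Y)
  =
  \Expect\bigl(\Expect(Y\givn P_S)\bigr)
  =
  \Expect(P_S).
\]

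Next I would unfold the definition of the probability interval. By construction the interval produced by $V$ is $\conv(P_0,P_1)=[\min(P_0,P_1),\max(P_0,P_1)]$, so $\underline{V}(X;X_1,Y_1,\ldots,X_l,Y_l)=\min(P_0,P_1)$ and $\overline{V}(X;X_1,Y_1,\ldots,X_l,Y_l)=\max(P_0,P_1)$. Since $P_S\in\{P_0,P_1\}$ pointwise, we have the almost-sure bounds $\underline{V}\le P_S\le \overline{V}$. Taking expectations preserves these inequalities, yielding
\[
  \Expect(\underline{V})
  \;\le\;
  \Expect(P_S)
  \;=\;
  \Prob(Y=1)
  \;\le\;
  \Expect(\overline{V}),
\]
which is exactly~(\ref{eq:validity}).

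There is essentially no obstacle: the only mildly delicate point is to notice that Theorem~\ref{thm:validity} is stated with the same IID assumption and the same training/test setup, so the selector $S$ whose existence is guaranteed there is exactly the object we need here, and no additional measurability or integrability issues arise because $P_0,P_1,\underline{V},\overline{V}$ all lie in $[0,1]$.
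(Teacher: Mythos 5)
Your proposal is correct and follows essentially the same route as the paper: invoke Theorem~\ref{thm:validity} to get a selector $S$, write $\Prob(Y=1)=\Expect(Y)=\Expect(\Expect(Y\givn P_S))=\Expect(P_S)$, and conclude via $\underline{V}\le P_S\le\overline{V}$ and monotonicity of expectation. The paper's proof is just a more compressed statement of exactly this argument.
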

  
\begin{proof}
  It suffices to notice that,
  for a selector $S$ such that $P=P_S$ 
  ($(P_0,P_1)$ being the output of $V$)
  satisfies the condition of perfect calibration~(\ref{eq:calibration}),
  \begin{equation*}  %
    \Prob(Y=1)
    =
    \Expect(Y)
    =
    \Expect(\Expect(Y\mid P_S)) %
    =
    \Expect(P_S)
    \in
    \left[
      \Expect\underline{V},\Expect\overline{V}
    \right],
  \end{equation*}
  where the arguments of $\underline{V}$ and $\overline{V}$ are omitted.
  \ifFULL\bluebegin
    \textbf{The following is a direct argument:}
    First notice that in (\ref{eq:validity}) we can replace the left-hand side
    by the expectation of the arithmetic mean of $Y_1,\ldots,Y_l,Y$
    and the right-hand side by the (one-element set consisting of the)
    expectation of the VP prediction $p_Y$ with $Y$ as the postulated classification for $X$.
    Now suppose that the bag $\lbag (X_1,Y_1),\ldots,(X_l,Y_l),(X,Y)\rbag$
    has already been generated
    and it only remains to be decided which element of the bag is the test observation.
    Then the expectation on the left-hand side becomes a constant
    (the arithmetic mean of $Y_1,\ldots,Y_l,Y$ is now known),
    and the expectation on the right-hand side becomes the average over the equivalence classes
    of the percentage of $1$s in each equivalence class;
    the two sides clearly coincide.
  \blueend\fi
\end{proof}

Unbiasedness in the large \eqref{eq:validity} is easy to achieve even for probabilistic predictors
if we do not care about other measures of quality of our predictions:
for example, the probabilistic predictor ignoring the $x$s
and outputting $k/l$ as its prediction,
where $k$ is the number of $1$s in the training sequence of size $l$,
is unbiased in the large.
Unbiasedness in the small~\eqref{eq:calibration} is also easy to achieve
if we allow multiprobabilistic predictors:
consider the multiprobabilistic predictor ignoring the $x$s
and outputting $\{k/(l+1),(k+1)/(l+1)\}$ as its prediction.
The problem is how to achieve predictive efficiency
(making our prediction as relevant to the test object as possible without overfitting)
while maintaining validity.

Our following result, Theorem~\ref{thm:uniqueness},
will say that under mild regularity conditions
unbiasedness in the small \eqref{eq:calibration} holds only for Venn predictors
(perhaps weakened by adding irrelevant probabilistic predictions)
and, therefore,
implies all other properties of validity,
such as the more complicated one given in~\cite[Chapter~6]{vovk/etal:2005book}.

To state Theorem~\ref{thm:uniqueness} we need a few further definitions.
Let us fix the length $l$ of the training sequence for now.
A \emph{multiprobabilistic predictor} is a function
that maps each sequence $(z_1,\ldots,z_l)\in\mathbf{Z}^l$ to a subset of $[0,1]$
(not required to be measurable in any sense).
Venn predictors are an important example for this paper.
Let us say that a multiprobabilistic predictor is \emph{invariant}
if it is independent of the ordering of the training set
$(z_1,\ldots,z_l)$.
An \emph{invariant selector} for an invariant multiprobabilistic predictor $F$
is a measurable function $f:\mathbf{Z}^{l+1}\to[0,1]$
such that $f(z_1,\ldots,z_{l+1})$ does not change
when $z_1,\ldots,z_l$ are permuted
and such that $f(z_1,\ldots,z_{l+1})\in F(z_1,\ldots,z_{l})$
for all $(z_1,\ldots,z_{l+1})$.
(It is natural to consider only invariant predictors and selectors
under the IID assumption
because of the principle of sufficiency \cite[Chap.~2]{cox/hinkley:1974}.)
We say that an invariant multiprobabilistic predictor $F$
is \emph{invariantly perfectly calibrated}
if it has an invariant selector $f$ such that
\begin{equation}\label{eq:star}  %
  \Expect
  \bigl(
    Y\mid f(Z_1,\ldots,Z_{l},(X,Y))
  \bigr) %
  =
  f(Z_1,\ldots,Z_{l},(X,Y))
  \enspace
  \text{a.s.}
\end{equation}
whenever $Z_1,\ldots,Z_l,(X,Y)$ are IID observations.

\begin{theorem}\label{thm:uniqueness}
  If an invariant multiprobabilistic predictor $F$ is invariantly perfectly calibrated,
  then it contains a Venn predictor $V$ in the sense that
  both elements of $V(Z_1,\ldots,Z_{l})$ belong to $F(Z_1,\ldots,Z_{l})$
  almost surely provided $Z_1,\ldots,Z_l$ are IID.
\end{theorem}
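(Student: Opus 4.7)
The plan is to construct a Venn taxonomy $A$ directly from the invariant selector $f$. For any sequence $(z_1,\ldots,z_{l+1})$ and any position $i$, invariance of $f$ under permutations of its first $l$ arguments lets me set unambiguously
$$
  v_i
  :=
  f(z_1,\ldots,z_{i-1},z_{i+1},\ldots,z_{l+1},z_i),
$$
the value $f$ would assign if $z_i$ played the test role. I declare $i \sim j$ iff $v_i = v_j$; this is measurable (preimage of the diagonal under a measurable map) and equivariant, because $v_i$ depends only on the pair made of the bag of the other observations and the element $z_i$, so renaming positions simply relabels equivalence classes. For lengths $n \ne l+1$ I extend the taxonomy trivially (say, to singletons); only $n = l+1$ is used by the Venn predictor with training length $l$.

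Next I would write out the resulting prediction. Setting $B_y := \lbag z_1,\ldots,z_l,(x,y)\rbag$ and, for $b \in B_y$, $v(b) := f(B_y \setminus \lbag b\rbag, b)$, the Venn prediction on training $(z_1,\ldots,z_l)$, test object $x$, and postulated label $y$ is
$$
  p_y
  =
  \frac{|\{b \in B_y : v(b) = v((x,y)),\, y_b = 1\}|}
       {|\{b \in B_y : v(b) = v((x,y))\}|},
$$
where $y_b$ is the label component of $b$. The aim is to show $p_y = v((x,y)) = f(z_1,\ldots,z_l,(x,y))$, which lies in $F(z_1,\ldots,z_l)$ by the defining property of an invariant selector.

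The core step is to upgrade the almost-sure calibration~(\ref{eq:star}) to the combinatorial identity: for \emph{every} bag $B$ of size $l+1$ and every $p$ in the image of $v$ on $B$, the average of $y_b$ over $b \in B$ with $v(b) = p$ equals $p$. I would apply the hypothesis to the IID law $P$ uniform on the distinct elements appearing in any given $B$; under $P^{l+1}$ the bag $B$ has strictly positive probability, and conditional on realizing it the test observation $(X,Y)$ is uniform on $B$, so~(\ref{eq:star}) collapses exactly to the displayed identity. Since the hypothesis is posited for every IID law, the identity holds for every bag, and substituting into the formula for $p_y$ yields $p_y = f(z_1,\ldots,z_l,(x,y)) \in F(z_1,\ldots,z_l)$ pointwise --- a conclusion strictly stronger than the almost-sure one stated.

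The main obstacle I anticipate is exactly this ``a.s.-to-everywhere'' bootstrap. Working only with the fixed underlying IID law, the bag $\lbag Z_1,\ldots,Z_l,(X,y)\rbag$ corresponding to a ``wrong'' postulated label (e.g.\ one with $\Prob(Y = y \mid X) = 0$) need not occur with positive probability, so~(\ref{eq:star}) yields no information about it, even though both $p_0$ and $p_1$ must belong to $F$. The freedom to invoke the hypothesis for every IID law --- in particular, for finitely supported ones, under which ``a.s.'' becomes ``everywhere'' --- removes this obstacle, and the remainder of the proof is routine bookkeeping.
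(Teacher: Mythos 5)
Your construction of the taxonomy ($i\sim j$ iff $v_i=v_j$, with $v_i$ the value of $f$ when $z_i$ plays the test role) is exactly the paper's, and your reduction of the theorem to the per-bag identity --- for every bag of size $l+1$, the average label over $\{b: v(b)=p\}$ equals $p$ --- is also the right target. The gap is in the step you yourself flag as the core one. Property~(\ref{eq:star}) conditions on the \emph{value} of $f(Z_1,\ldots,Z_l,(X,Y))$, not on the bag $\lbag Z_1,\ldots,Z_l,(X,Y)\rbag$. If you take $P$ uniform on the distinct elements of a given bag $B$, then under $P^{l+1}$ every multiset of size $l+1$ drawn from the support of $P$ occurs with positive probability, and elements of those other bags can produce the same $f$-value $p$ (recall $v(b)$ depends on the whole bag containing $b$, so the same point can get different values in different bags). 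Hence $\Expect(Y\mid f=p)=p$ only tells you that a probability-weighted mixture, over all these bags, of the class averages equals $p$; it does not ``collapse'' to the identity for the single bag $B$. Nothing in the hypothesis licenses conditioning on the bag, so the claimed pointwise identity does not follow from the single uniform law, and the a.s.-to-everywhere bootstrap as written fails.

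The paper closes exactly this hole with a completeness argument: from~(\ref{eq:star}) one gets $\Expect\bigl((Y-f)\,1_{\{f\in[a,b]\}}\bigr)=0$ for all rational intervals under every IID law; averaging first over orderings of the bag shows that the quantity $\frac{1}{l+1}\sum_i (y_i-p_i)1_{\{p_i\in[a,b]\}}$ is a function of the bag alone with zero mean under every product law, and completeness of the statistic mapping a sample to its bag \cite[Section~4.3]{lehmann:1986} then forces it to vanish for almost all bags, which is the per-bag identity you need (only a.s., which is why the theorem's conclusion is a.s.\ rather than pointwise). Your instinct to exploit the quantification over all IID laws, in particular finitely supported ones, is sound --- that is how completeness of the order statistic is proved --- but it requires varying the weights on the support and disentangling the contributions of different bags (a polynomial-identity argument), not a single uniform law. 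Either import the completeness result, as the paper does, or carry out that separation argument explicitly; as it stands, the central step is unjustified.
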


\begin{proof}
  Let $f$ be an invariant selector of $F$
  satisfying the condition~(\ref{eq:star})
  of being invariantly perfectly calibrated.
  By definition,
  \begin{equation*}
    \Expect
    \bigl(
      Y-f(Z_1,\ldots,Z_{l},(X,Y))
      \mid{} %
      f(Z_1,\ldots,Z_{l},(X,Y))
    \bigr)
    =
    0
    \enspace
    \text{a.s.},
  \end{equation*}
  which implies
  \begin{equation}\label{eq:conditional}  %
    \Expect
    \bigl(
      (Y-f(Z_1,\ldots,Z_{l},(X,Y))) %
      1_{\{f(Z_1,\ldots,Z_{l},(X,Y))\in[a,b]\}}
    \bigr)
    =
    0
    \enspace
    \text{a.s.}
  \end{equation}
  for all intervals $[a,b]$ with rational end-points.
  The expected value in \eqref{eq:conditional}
  can be obtained in two steps:
  first we average
  \begin{equation*}
    (y'_{l+1}-f(z'_1,\ldots,z'_{l+1}))
    1_{\{f(z'_1,\ldots,z'_{l+1})\in[a,b]\}}
  \end{equation*}
  over the orderings $(z'_1,\ldots,z'_{l+1})$
  of each bag $\lbag z_1,\ldots,z_{l+1}\rbag$,
  where $z_i=(x_i,y_i)$ and $z'_i=(x'_i,y'_i)$,
  and then we average over the bags $\lbag z_1,\ldots,z_{l+1}\rbag$
  generated according to $z_i:=Z_i$, $i=1,\ldots,l$, and $z_{l+1}:=(X,Y)$.
  The first operation is discrete:
  the average over the orderings of $\lbag z_1,\ldots,z_{l+1}\rbag$
  is the arithmetic mean of $(y_i-p_i)1_{\{p_i\in[a,b]\}}$ over $i=1,\ldots,l+1$,
  where $p_i:=f(\ldots,z_i)$ and the dots stand
  for $z_1,\ldots,z_{i-1}$ and $z_{i+1},\ldots,z_{l+1}$ arranged in any order
  (since $f$ is invariant, the order does not matter).
  By the completeness of the statistic that maps a data sequence of size $l+1$
  to the corresponding bag \cite[Section~4.3]{lehmann:1986},
  this average is zero for all $[a,b]$ and almost all bags.
  Without loss of generality we assume that this holds for all bags.

  Define a Venn taxonomy $A$ as follows:
  given a sequence $(z_1,\ldots,z_{l+1})$,
  set $i\sim j$ if $p_i=p_j$ where $p$ is defined as above.
  It is easy to check that the corresponding Venn predictor
  satisfies the requirement in Theorem~\ref{thm:uniqueness}.
\end{proof}

\begin{remark}
  The invariance assumption in Theorem~\ref{thm:uniqueness}
  is essential.
  Indeed, suppose $l>1$ and consider the multiprobabilistic predictor
  whose prediction for the label of the test observation
  does not depend on the objects and is
  $$
    \begin{cases}
      \{k/l,(k+1)/l\} & \text{if $y_1=0$}\\
      \{(k-1)/l,k/l\} & \text{if $y_1=1$},
    \end{cases}
  $$
  where $k$ is the number of 1s among the labels of the $l$ training observations.
  This non-invariant predictor is perfectly calibrated
  (see below)
  but does not contain a Venn predictor
  (if it did, such a Venn predictor, being invariant,
  would always output the one-element multiprobabilistic prediction $\{k/l\}$,
  which is impossible).
    Let us check that this non-invariant predictor is indeed perfectly calibrated,
    even given the union of the training set and the test observation
    (i.e., given the bag of size $l+1$ obtained from the training sequence
    by joining the test observation and then forgetting the ordering).
    Take the selector such that the selected probabilistic predictor is
    $$
      \begin{cases}
        k/l & \text{for sequences of the form $0\ldots0$}\\
        (k+1)/l & \text{for sequences of the form $0\ldots1$}\\
        (k-1)/l & \text{for sequences of the form $1\ldots0$}\\
        k/l & \text{for sequences of the form $1\ldots1$}.
      \end{cases}
    $$
    For a binary sequence of labels of length $l+1$ with $m$ 1s
    the probabilistic prediction $P$
    for its last element will be, therefore,
    $$
      \begin{cases}
        m/l & \text{for sequences of the form $0\ldots0$}\\
        m/l & \text{for sequences of the form $0\ldots1$}\\
        (m-1)/l & \text{for sequences of the form $1\ldots0$}\\
        (m-1)/l & \text{for sequences of the form $1\ldots1$}.
      \end{cases}
    $$
    The conditional probability that $Y=1$
    ($Y$ being the label of the last element)
    given $P=p$ (and given $m$) is
    $$
      \frac{\binom{l-1}{m-1}}{\binom{l}{m}}
      =
      \frac{m}{l}
    $$
    when $p=m/l$
    and is
    $$
      \frac{\binom{l-1}{m-2}}{\binom{l}{m-1}}
      =
      \frac{m-1}{l}
    $$
    when $p=(m-1)/l$;
    in both cases we have perfect calibration.
\end{remark}

\section{Venn--Abers predictors}
\label{sec:VAP}

We say that a function $f$ is \emph{increasing}
if its domain is an ordered set and $t_1\le t_2\Rightarrow f(t_1)\le f(t_2)$.

Many machine-learning algorithms for classification are in fact \emph{scoring classifiers}:
when trained on a training sequence of observations and fed with a test object $x$,
they output a \emph{prediction score} $s(x)$;
we will call $s:\mathbf{X}\to\bbbr$ the \emph{scoring function} for that training sequence.
The actual classification algorithm is obtained by fixing a threshold $c$
and predicting the label of $x$ to be $1$ if and only if $s(x)\ge c$
(or if and only if $s(x)>c$).
Alternatively, one could apply an increasing function $g$ to $s(x)$
in an attempt to ``calibrate'' the scores,
so that $g(s(x))$ can be used as the predicted probability that the label of $x$ is $1$.

Fix a scoring classifier and let $(z_1,\ldots,z_l)$ be a training sequence
of observations $z_i=(x_i,y_i)$, $i=1,\ldots,l$.
The most direct application \cite{zadrozny/elkan:2002} of the method of isotonic regression
\cite{ayer/etal:1955} to the problem of score calibration is as follows.
Train the scoring classifier on the training sequence
and compute the score $s(x_i)$ for each training observation $(x_i,y_i)$,
where $s$ is the scoring function for $(z_1,\ldots,z_l)$.
Let $g$ be the increasing function on the set $\{s(x_1),\ldots,s(x_l)\}$
that maximizes the likelihood
\begin{equation}\label{eq:likelihood}
  \prod_{i=1}^l
  p_i,
  \text{\quad where }
  p_i
  :=
  \begin{cases}
    g(s(x_i)) & \text{if $y_i=1$}\\
    1-g(s(x_i)) & \text{if $y_i=0$}.
  \end{cases}
\end{equation}
Such a function $g$ is indeed unique
\cite[Corollary~2.1]{ayer/etal:1955}
and can be easily found using the ``pair-adjacent violators algorithm''
(PAVA, described in detail in the summary of \cite{ayer/etal:1955}
and in \cite[Section~1.2]{barlow/etal:1972};
see also the proof of Lemma~\ref{lem:must-be-well-known} below).
We will say that $g$ is the \emph{isotonic calibrator}
for $((s(x_1),y_1),\ldots,(s(x_l),y_l))$.
To predict the label of a test object $x$,
the direct method finds the closest $s(x_i)$ to $s(x)$ and outputs $g(s(x_i))$ as its prediction
(in the case of ties our implementation of this method used in Section~\ref{sec:experiments}
chooses the smaller $s(x_i)$;
however, ties almost never happen in our experiments).
We will refer to this as the \emph{direct isotonic-regression} (DIR) method.

The direct method is prone to overfitting
as the same observations $z_1,\ldots,z_l$ are used both for training the scoring classifier
and for calibration without taking any precautions.
The \emph{Venn--Abers predictor} corresponding to the given scoring classifier
is the multiprobabilistic predictor that is defined as follows.
Try the two different labels, $0$ and $1$, for the test object $x$.
Let $s_0$ be the scoring function for $(z_1,\ldots,z_l,(x,0))$,
$s_1$ be the scoring function for $(z_1,\ldots,z_l,(x,1))$,
$g_0$ be the isotonic calibrator for
\begin{equation}\label{eq:ic-1}
  \bigl(
    (s_0(x_1),y_1),\ldots,(s_0(x_l),y_l),(s_0(x),0)
  \bigr),
\end{equation}
and $g_1$ be the isotonic calibrator for
\begin{equation}\label{eq:ic-2}
  \bigl(
    (s_1(x_1),y_1),\ldots,(s_1(x_l),y_l),(s_1(x),1)
  \bigr).
\end{equation}
The multiprobabilistic prediction output by the Venn--Abers predictor
is $(p_0,p_1)$,
where $p_0:=g_0(s_0(x))$ and $p_1:=g_1(s_1(x))$.
(And we can expect $p_0$ and $p_1$ to be close to each other
unless DIR overfits grossly.)
The Venn--Abers predictor is described as Algorithm~\ref{alg:VA}.

\begin{algorithm}[bt]
  \caption{Venn--Abers predictor}
  \label{alg:VA}
  \begin{algorithmic}
    \renewcommand{\algorithmicrequire}{\textbf{Input:}}
    \renewcommand{\algorithmicensure}{\textbf{Output:}}
    \REQUIRE training sequence $(z_1,\ldots,z_l)$
    \REQUIRE test object $x$
    \ENSURE multiprobabilistic prediction $(p_0,p_1)$ %
    \FOR{$y\in\{0,1\}$}
      \STATE set $s_y$ to the scoring function for
        $(z_1,\ldots,z_l,(x,y))$
      \STATE set $g_y$ to the isotonic calibrator for
        \STATE \qquad $(s_y(x_1),y_1),\ldots,(s_y(x_l),y_l),(s_y(x),y)$
      \STATE set $p_y:=g_y(s_y(x))$
    \ENDFOR
  \end{algorithmic}
\end{algorithm}

The intuition behind Algorithm~\ref{alg:VA}
is that it tries to evaluate the robustness of the DIR prediction.
To see how sensitive the scoring function is to the training set
we extend the latter by adding the test object labelled in two different ways.
And to see how sensitive the probabilistic prediction is,
we again consider the training set extended in two different ways
(if it is sensitive, the prediction will be fragile
even if the scoring function is robust).
For large data sets and inflexible scoring functions,
we will have $p_0\approx p_1$,
and both numbers will be close to the DIR prediction.
However, even if the data set is very large but the scoring function is very flexible,
$p_0$ can be far from $p_1$
(the extreme case is where the scoring function is so flexible that it ignores all observations
apart from a few that are most similar to the test object,
and in this case it does not matter how big the data set is).
We rarely know in advance how flexible our scoring function is
relative to the size of the data set,
and the difference between $p_0$ and $p_1$ gives us some indication of this.

The following proposition says that Venn--Abers predictors are Venn predictors
and, therefore, inherit all properties of validity of the latter,
such as Theorem~\ref{thm:validity}.

\begin{proposition}\label{prop:VAP}
  Venn--Abers predictors are Venn predictors.
\end{proposition}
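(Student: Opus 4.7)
The plan is to exhibit, for the Venn--Abers predictor attached to a given scoring classifier, a Venn taxonomy whose associated Venn predictor produces the same multiprobabilistic output $(p_0,p_1)$. The natural candidate is built directly from isotonic calibration: given a sequence $(z_1,\ldots,z_n)$ with $z_i=(x_i,y_i)$, let $s$ be the scoring function trained on the whole sequence, let $g$ be the isotonic calibrator for $\bigl((s(x_1),y_1),\ldots,(s(x_n),y_n)\bigr)$, and define $i\sim j$ to mean $g(s(x_i))=g(s(x_j))$; call the resulting Venn taxonomy $A$.

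Two things then need to be checked. First, that $A$ is a bona fide Venn taxonomy: equivariance reduces to the fact that both $s$ (for any reasonable scoring classifier, as is implicit in the paper's use of the term) and $g$ (the maximizer of the likelihood~\eqref{eq:likelihood}, which is determined by the bag of $(s(x_i),y_i)$ pairs) are invariant under permutations of their inputs, so whether $i$ and $j$ lie in a common level set of $g\circ s$ is intrinsic to the data points rather than to their positions. Second, and more substantively, one needs the standard level-set-mean property of PAVA: on any level set of $g$, the constant value of $g$ equals the arithmetic mean of the corresponding labels $y_i$; I would invoke Lemma~\ref{lem:must-be-well-known} for this.

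Granting these two ingredients, the identification is immediate. Apply $A$ to the extended sequence $(z_1,\ldots,z_l,(x,y))$ for each $y\in\{0,1\}$; the scoring function is $s_y$ and the isotonic calibrator is $g_y$, as in~\eqref{eq:ic-1}--\eqref{eq:ic-2}. The equivalence class of the index $l+1$ consists of all $i\in\{1,\ldots,l+1\}$ with $g_y(s_y(x_i))=g_y(s_y(x))$ (taking $x_{l+1}:=x$ and $y_{l+1}:=y$), and by the level-set-mean property the fraction of $1$'s in this class equals $g_y(s_y(x))=p_y$. Hence the Venn predictor induced by $A$ outputs exactly $(p_0,p_1)$.

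The main obstacle is the PAVA level-set-mean property; beyond that, the proof is a direct matching of definitions. A secondary, mild issue is making precise the invariance of the scoring classifier under permutations of its training bag, which I would take as a standing assumption on ``scoring classifiers''.
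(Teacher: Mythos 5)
Your proposal is correct and follows essentially the same route as the paper's own proof: the same taxonomy defined by $i\sim j\Leftrightarrow g(s(x_i))=g(s(x_j))$ with $s$ and $g$ computed from the full sequence, combined with Lemma~\ref{lem:must-be-well-known} to identify the fraction of $1$'s in the equivalence class of the test object with $g_y(s_y(x))=p_y$. Your explicit remarks on equivariance and the permutation invariance of the scoring classifier merely spell out what the paper leaves implicit.
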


\begin{proof}
  Fix a Venn--Abers predictor.
  The corresponding Venn taxonomy is defined as follows:
  given a sequence
  $$
    (z_1,\ldots,z_n)
    =
    ((x_1,y_1),\ldots,(x_n,y_n))
    \in
    (\mathbf{X}\times\{0,1\})^n
  $$
  and $i,j\in\{1,\ldots,n\}$,
  we set $i\sim j$ if and only if $g(s(x_i))=g(s(x_j))$,
  where $s$ is the scoring function for $(z_1,\ldots,z_n)$
  and $g$ is the isotonic calibrator for
  $$
    \bigl(
      (s(x_1),y_1),\ldots,(s(x_n),y_n)
    \bigr).
  $$
  Lemma~\ref{lem:must-be-well-known} below shows
  that the Venn predictor corresponding to this taxonomy
  gives predictions identical to those given by the original Venn--Abers predictor.
  This proves the proposition.
\end{proof}

\begin{lemma}\label{lem:must-be-well-known}
  Let $g$ be the isotonic calibrator for
  $((t_1,y_1),\ldots,(t_n,y_n))$,   %
  where $t_i\in\bbbr$ and $y_i\in\{0,1\}$, $i=1,\ldots,n$.
  Any $p\in\{g(t_1),\ldots,g(t_n)\}$ is equal to the arithmetic mean
  of the labels $y_i$ of the $t_i$, $i=1,\ldots,n$, satisfying $g(t_i)=p$.
\end{lemma}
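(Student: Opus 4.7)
My plan is to invoke the Pool Adjacent Violators Algorithm (PAVA), already cited in the excerpt via \cite{ayer/etal:1955} and \cite[Section~1.2]{barlow/etal:1972}, which both constructs the isotonic calibrator $g$ and exhibits its level sets explicitly.

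First I would reorder the indices so that $t_1 \le \cdots \le t_n$; ties among the $t_i$ are harmless, because any two indices with $t_i=t_j$ must receive the same value under any increasing function $g$ on $\{t_1,\ldots,t_n\}$ and so automatically lie in the same set $\{i : g(t_i)=p\}$. By the cited results, PAVA outputs the unique maximiser of the likelihood~(\ref{eq:likelihood}), and it does so by partitioning $\{1,\ldots,n\}$ into consecutive blocks $B_1 < B_2 < \cdots < B_k$ on each of which $g$ is constant with value equal to the block average $\bar y_{B_j} := |B_j|^{-1}\sum_{i\in B_j} y_i$.

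Given this explicit description, fix $p\in\{g(t_1),\ldots,g(t_n)\}$ and let $I_p := \{i : g(t_i)=p\}$. Then $I_p$ is precisely the union of those blocks $B_j$ for which $\bar y_{B_j}=p$. The arithmetic mean of $y_i$ over $i\in I_p$ is a weighted average of the block averages $\bar y_{B_j}$ with weights $|B_j|$, so every term of that weighted average equals $p$, hence so does the mean. This is the claim of the lemma.

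The main obstacle is conceptual rather than technical: one has to appeal to the correctness of PAVA (that its output really is the likelihood-maximising increasing function and that its blocks are the level sets), which is standard but nontrivial. Since the excerpt already invokes this fact, the remaining work reduces to the short weighted-average bookkeeping above, together with the observation that ties in $t_i$ do not cause ambiguity in the statement.
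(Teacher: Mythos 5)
Your proposal is correct and follows essentially the same route as the paper: both arguments rest on the PAVA block structure, in which $g$ is constant on consecutive cells with value equal to the cell's label average, so that any level set of $g$ is a union of cells all sharing the value $p$ and its overall label mean is a weighted average of numbers equal to $p$. The only difference is one of packaging: the paper re-derives the block-average invariant by reproducing the PAVA merge steps (it is preserved under each merge $(a_1+a_2)/(N_1+N_2)$), whereas you cite it as a known property of PAVA and spell out the final weighted-average bookkeeping, which the paper leaves implicit.
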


\begin{proof}
  The statement of the lemma immediately follows from the definition of the PAVA
  \cite[summary]{ayer/etal:1955}, which we will reproduce here.
  Arrange the numbers $t_i$ in the strictly increasing order
  $t_{(1)}<\cdots<t_{(k)}$, where $k\le n$ is the number of distinct elements among $t_i$.
  We would like to find the increasing function $g$
  on the set $\{t_{(1)},\ldots,t_{(k)}\}=\{t_1,\ldots,t_n\}$
  maximizing the likelihood
  (defined by (\ref{eq:likelihood}) with $t_i$ in place of $s(x_i)$ and $n$ in place of $l$).
  The procedure is recursive.
  At each step the set $\{t_{(1)},\ldots,t_{(k)}\}$
  is partitioned into a number of disjoint cells consisting of adjacent elements of the set;
  to each cell is assigned a ratio $a/N$ (formally, a pair of integers, with $a\ge0$ and $N>0$);
  the function $g$ defined at this step (perhaps to be redefined at the following steps)
  is constant on each cell.
  For $j=1,\ldots,k$,
  let $a_{(j)}$ be the number of $i$ such that $y_i=1$ and $t_i=t_{(j)}$,
  and let $N_{(j)}$ be the number of $i$ such that $t_i=t_{(j)}$.
  Start from the partition of $\{t_{(1)},\ldots,t_{(k)}\}$ into one-element cells,
  assign the ratio $a_{(j)}/N_{(j)}$ to $\{t_{(j)}\}$,
  and set
  \begin{equation}\label{eq:initial}
    g(t_{(j)}):=\frac{a_{(j)}}{N_{(j)}}
  \end{equation}
  (in the notation used in this proof,
  $a/N$ is a pair of integers whereas $\frac{a}{N}$ is a rational number, the result of the division).
  If the function $g$ is increasing, we are done.
  If not, there is a pair $C_1,C_2$ of adjacent cells (``violators'')
  such that $C_1$ is to the left of $C_2$ and $g(C_1)>g(C_2)$
  (where $g(C)$ stands for the common value of $g(t_{(j)})$ for $t_{(j)}\in C$);
  in this case redefine the partition by merging $C_1$ and $C_2$ into one cell $C$,
  assigning the ratio $(a_1+a_2)/(N_1+N_2)$ to $C$,
  where $a_1/N_1$ and $a_2/N_2$ are the ratios assigned to $C_1$ and $C_2$, respectively,
  and setting
  \begin{equation}\label{eq:step}  %
    g(t_{(j)})
    :=
    \frac{N_1}{N_1+N_2}
    g(C_1)
    +
    \frac{N_2}{N_1+N_2}
    g(C_2) %
    =
    \frac{a_1+a_2}{N_1+N_2}
  \end{equation}   %
  for all $t_{(j)}\in C$.
  Repeat the process until $g$ becomes increasing
  (the number of cells decreases by $1$ at each iteration,
  so the process will terminate in at most $k$ steps).
  The final function $g$ is the one that maximizes the likelihood.
  The statement of the lemma follows from this recursive definition:
  it is true by definition for the initial function (\ref{eq:initial})
  and remains true when $g$ is redefined by (\ref{eq:step}).
\end{proof}

\ifFULL\bluebegin
  \section{Pre-trained Venn--Abers predictors}
  \label{sec:IVAP}

  In general, Venn--Abers predictors are computationally inefficient,
  especially if we would like to apply them to a large number of test observations
  and the same training sequence.
  More computationally efficient \emph{pre-trained Venn--Abers predictors} are defined as follows.
  The training set $\lbag z_1,\ldots,z_l\rbag$ is split into two parts:
  the \emph{proper training set} $\lbag z_1,\ldots,z_m\rbag$ of size $m<l$
  and the \emph{calibration set} $\lbag z_{m+1},\ldots,z_l\rbag$ of size $l-m$.
  Let $s:\mathbf{X}\to\bbbr$ be the scoring function for $\lbag z_1,\ldots,z_m\rbag$,
  $g_0$ be the isotonic calibrator for
  $$
    \lbag(s(x_{m+1}),y_{m+1}),\ldots,(s(x_l),y_l),(s(x),0)\rbag,
  $$
  and $g_1$ be the isotonic calibrator for
  $$
    \lbag(s(x_{m+1}),y_{m+1}),\ldots,(s(x_l),y_l),(s(x),1)\rbag.
  $$
  The multiprobabilistic prediction
  output by the pre-trained Venn--Abers predictor is $(p_0,p_1)$,
  where $p_0:=g_0(s(x))$ and $p_1:=g_1(s(x))$.
  (This definition is in the spirit of inductive conformal predictors
  \cite{vovk/etal:2005book}, Section 4.1,
  but we avoid using the term ``inductive Venn--Abers predictors''
  since our pre-trained Venn--Abers predictors are not inductive Venn predictors
  in the sense of \cite{lambrou/etal:2012}, Section~3.1.)

  The following is a simple corollary of Proposition~\ref{prop:VAP}.

  \begin{corollary}\label{cor:IVAP}
    Pre-trained Venn--Abers predictors are Venn predictors
    when considered as functions of $(z_{m+1},\ldots,z_l)$.
  \end{corollary}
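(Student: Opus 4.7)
My plan is to mimic the proof of Proposition~\ref{prop:VAP}, but with the scoring function $s$ now being a fixed object (since it depends only on the proper training set $z_1,\ldots,z_m$, which is held constant when we view the predictor as a function of $(z_{m+1},\ldots,z_l)$). So the first step is to condition on $(z_1,\ldots,z_m)$, regarding $s:\mathbf{X}\to\bbbr$ as a fixed measurable function, and viewing the pre-trained Venn--Abers predictor as a multiprobabilistic predictor taking as input a training sequence $(z_{m+1},\ldots,z_l)$ of length $l-m$ and a test object $x$.

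Second, I would exhibit an explicit Venn taxonomy on sequences in $\mathbf{Z}^n$ (for every $n\ge 2$) of the calibration-set-plus-test form, defined as follows: given $(z'_1,\ldots,z'_n)=((x'_1,y'_1),\ldots,(x'_n,y'_n))$, let $g$ be the isotonic calibrator for $((s(x'_1),y'_1),\ldots,(s(x'_n),y'_n))$, and declare $i\sim j$ iff $g(s(x'_i))=g(s(x'_j))$. Equivariance under permutations is immediate because the isotonic calibrator depends only on the multiset $\lbag (s(x'_1),y'_1),\ldots,(s(x'_n),y'_n)\rbag$, and measurability follows from the measurability of $s$ and of the PAVA output as a function of its inputs.

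Third, I would verify that the Venn predictor attached to this taxonomy, when fed the training sequence $(z_{m+1},\ldots,z_l)$ and test object $x$, reproduces the multiprobabilistic prediction $(p_0,p_1)$ of the pre-trained Venn--Abers predictor. For each hypothesized label $y\in\{0,1\}$, the equivalence class of the test position $l-m+1$ in $(z_{m+1},\ldots,z_l,(x,y))$ consists exactly of those indices $i$ whose scores $s(x_i)$ (or $s(x)$) are mapped by $g_y$ to the same value as $s(x)$. By Lemma~\ref{lem:must-be-well-known}, the fraction of $1$'s in that class equals $g_y(s(x))=p_y$, which is precisely the Venn--Abers output.

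The construction contains no real obstacle; the only subtlety worth flagging is the role of the parenthetical ``as functions of $(z_{m+1},\ldots,z_l)$'' in the statement. The pre-trained Venn--Abers predictor is not a Venn predictor if one insists on taking the full $(z_1,\ldots,z_l)$ as the training sequence, because $s$ is not symmetric in all $l$ arguments; the taxonomy above is equivariant only in the calibration-set coordinates. Making this distinction explicit at the start of the proof, and then verifying equivariance and measurability for the taxonomy on the calibration coordinates alone, is what turns the argument into a direct corollary of Proposition~\ref{prop:VAP} applied with $s$ held fixed.
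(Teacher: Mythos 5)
Your proposal is correct and follows essentially the same route as the paper: the paper's proof simply observes that, with the proper training set fixed, the pre-trained predictor is the Venn--Abers predictor whose scoring functions $s_0=s_1=s$ ignore the calibration data, and then invokes Proposition~\ref{prop:VAP}; you have merely unfolded that reduction by re-deriving the taxonomy and re-applying Lemma~\ref{lem:must-be-well-known} with $s$ held fixed. Your closing remark about equivariance holding only in the calibration coordinates is exactly the point of the qualifier ``as functions of $(z_{m+1},\ldots,z_l)$'' in the statement.
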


  \begin{proof}
    The statement of the corollary follows from the fact that for a fixed bag $\lbag z_1,\ldots,z_m\rbag$
    the pre-trained Venn--Abers predictor is the Venn--Abers predictor
    corresponding to a scoring function $s_0=s_1=s$ that does not depend on the data
    $\lbag z_{m+1},\ldots,z_l\rbag$ at all
  \end{proof}

  [Talk about inductive Venn predictors.]

  \section{Cross-Venn--Abers predictors}
  \label{sec:CVAP}

  Use isotonic regression for \emph{partial} orders in the case of CVAPs
  (scores in different folds are not comparable).

  To do:
  \begin{itemize}
  \item
    Produce nice pictures demonstrating that CVAPs are well calibrated
    (based on isotonic regression).
  \end{itemize}

  \section{Cross--Abers predictors}
  \label{sec:CAP}

  This section: the next step in adhockery as compared with Section~\ref{sec:CVAP}.
  We estimate the probability of $1$ from each fold
  without assigning different labels to the test object,
  and then average over the folds.
\blueend\fi

\section{Probabilistic predictors derived from Venn predictors}
\label{sec:minimax}

In the next section we will compare Venn--Abers predictors with known probabilistic predictors
using standard loss functions.
Since Venn--Abers predictors output pairs of probabilities rather than point probabilities,
we will need to fit them (somewhat artificially) in the standard framework
extracting one probability $p$ from $p_0$ and $p_1$.

\ifFULL\bluebegin
  The simplest way is to simply average $p_0$ and $p_1$:
  $p:=\bar p:=(p_0+p_1)/2$.
  But Ivan's preliminary results show that the minimax methods
  described below produce better results.
\blueend\fi

In this paper we will use two loss functions, log loss and square loss.
The \emph{log loss} suffered when predicting $p\in[0,1]$ whereas the true label
is $y$ is
$$
  \lambda_{\ln}(p,y)
  :=
  \begin{cases}
    -\ln(1-p) & \text{if $y=0$}\\
    -\ln p & \text{if $y=1$}.
  \end{cases}
$$
This is the most fundamental loss function,
since the cumulative loss $\sum_{i=1}^n\lambda_{\ln}(p_i,y_i)$
over a test sequence of size $n$
is equal to the minus log of the probability that the predictor assigns
to the sequence
(this assumes either the batch mode of prediction with independent test observations
or the online mode of prediction);
therefore, a smaller cumulative log loss corresponds to a larger probability.
The \emph{square loss} suffered when predicting $p\in[0,1]$
for the true label $y$ is
$$
  \lambda_{\sq}(p,y)
  :=
  (y-p)^2.
$$
The main advantage of this loss function is that it is \emph{proper}
(see, e.g., \cite{dawid:ESS2006PF}):
the function $\Expect_{y\sim B_p}\lambda_{\sq}(q,y)$ of $q\in[0,1]$,
where $B_p$ is the Bernoulli distribution with parameter $p$,
attains its minimum at $q=p$.
(Of course, the log loss function is also proper.)

First suppose that our loss function is $\lambda_{\ln}$
and we are given a multiprobabilistic prediction $(p_0,p_1)$;
let us find the corresponding minimax probabilistic prediction $p$.
If the true outcome is $y=0$,
our regret for using $p$ instead of the appropriate $p_0$ is $-\ln(1-p)+\ln(1-p_0)$.
If $y=1$, our regret for using $p$ instead of the appropriate $p_1$ is $-\ln p+\ln p_1$.
The first regret as a function of $p\in[0,1]$
strictly increases from a nonpositive value to $\infty$
as $p$ changes from $0$ to $1$.
The second regret as a function of $p$ strictly decreases from $\infty$ to a nonpositive value
as $p$ changes from $0$ to $1$.
Therefore, the minimax regret is the solution to
$$
  -\ln(1-p)+\ln(1-p_0)
  =
  -\ln p+\ln p_1,
$$
which is
\begin{equation}\label{eq:log}
  p
  =
  \frac{p_1}{1-p_0+p_1}.
\end{equation}
The intuition behind this minimax value of $p$
is that we can interpret the multiprobabilistic prediction $(p_0,p_1)$
as the unnormalized probability distribution $P$ on $\{0,1\}$
such that $P(\{0\})=1-p_0$ and $P(\{1\})=p_1$;
we then normalize $P$ to get a genuine probability distribution $P':=P/P(\{0,1\})$,
and the $p$ in (\ref{eq:log}) is equal to $P'(\{1\})$.
Of course, it is always true that $p\in\conv(p_0,p_1)$.

In the case of the square loss function,
the regret is
$$
  \begin{cases}
    p^2-p_0^2 & \text{if $y=0$}\\
    (1-p)^2-(1-p_1)^2 & \text{if $y=1$}
  \end{cases}
$$
and the two regrets are equal when
\begin{equation}\label{eq:square}
  p
  :=
  p_1 + p_0^2/2 - p_1^2/2.
\end{equation}
To see how natural this expression is notice that (\ref{eq:square}) is equivalent to
$$
  p
  =
  \bar p
  +
  (p_1-p_0)
  \left(
    \frac12 - \bar p
  \right),
$$
where $\bar p:=(p_0+p_1)/2$.
Therefore, $p$ is a regularized version of $\bar p$:
we move $\bar p$ towards the neutral value $1/2$
in the typical (for the Venn--Abers method) case where $p_0<p_1$.
In any case, we always have $p\in\conv(p_0,p_1)$.

The following lemma shows that log loss is never infinite for probabilistic predictors
derived from Venn predictors.
\begin{lemma}\label{lem:finite-log}
  Neither of the methods discussed in this section
  (see~\eqref{eq:log} and~\eqref{eq:square})
  ever produces $p\in\{0,1\}$
  when applied to Venn--Abers predictors.
\end{lemma}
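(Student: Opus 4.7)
The plan is to first pin down the range of $(p_0,p_1)$ that a Venn--Abers predictor can produce, and then verify that neither minimax transformation maps that range into $\{0,1\}$. The heavy lifting is done by Lemma~\ref{lem:must-be-well-known}, so the calculations after that are elementary.

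First I would invoke Lemma~\ref{lem:must-be-well-known} applied to the isotonic calibrator $g_0$ for the sequence \eqref{eq:ic-1}. That sequence contains the datum $(s_0(x),0)$, whose hypothetical label is $0$. Consequently $p_0 = g_0(s_0(x))$ equals the arithmetic mean of labels $y_i$ (together with this forced $0$) over the cell containing the test index, and this mean is strictly less than $1$. Hence $p_0 \in [0,1)$. By the symmetric argument applied to $g_1$ on the sequence \eqref{eq:ic-2}, whose last label is a forced $1$, we obtain $p_1 \in (0,1]$. These two strict bounds are the only inputs the rest of the proof needs.

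Next I would dispatch the log-loss formula \eqref{eq:log}, $p = p_1/(1 - p_0 + p_1)$. The denominator is positive because $1 - p_0 > 0$ and $p_1 \ge 0$, so $p$ is well defined; the numerator $p_1 > 0$ forces $p > 0$; and $p = 1$ would require $p_1 = 1 - p_0 + p_1$, i.e.\ $p_0 = 1$, which we have excluded. So $p \in (0,1)$.

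For the square-loss formula \eqref{eq:square}, I would rewrite it as
\begin{equation*}
  p \;=\; p_1\Bigl(1 - \tfrac{p_1}{2}\Bigr) + \tfrac{p_0^2}{2}.
\end{equation*}
Since $p_1 \in (0,1]$ gives $1 - p_1/2 \ge 1/2 > 0$, the first summand is strictly positive, hence $p > 0$. For the upper bound, I note that $x \mapsto x(1 - x/2)$ is increasing on $[0,1]$ with maximum value $1/2$ at $x=1$, so the first summand is $\le 1/2$, while the second summand satisfies $p_0^2/2 < 1/2$ strictly because $p_0 < 1$. Adding these gives $p < 1$. The only step that requires any care is keeping the inequalities strict in the right place (namely, the $p_0 < 1$ bound on the second summand for square loss, and the $p_0 < 1$ input to the log-loss argument); everything else is routine algebra once Lemma~\ref{lem:must-be-well-known} has been applied.
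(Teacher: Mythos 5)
Your proposal is correct and follows essentially the same route as the paper: both proofs extract $p_0<1$ and $p_1>0$ from Lemma~\ref{lem:must-be-well-known} (via the forced label $0$ or $1$ in the test point's cell) and then check that \eqref{eq:log} and \eqref{eq:square} cannot output $0$ or $1$. The only difference is cosmetic: the paper observes that both formulas yield $p$ in the interior of $\conv(p_0,p_1)$ (or $p=p_0=p_1$), whereas you verify the same fact by direct algebra on each formula.
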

\begin{proof}
  Lemma~\ref{lem:must-be-well-known} implies that $p_0<1$ and that $p_1>0$.
  It remains to notice that both (\ref{eq:log}) and (\ref{eq:square})
  produce $p$ in the interior of $\conv(p_0,p_1)$ if $p_0\ne p_1$
  and produce $p=p_0=p_1$ if $p_0=p_1$
  (and this is true for any sensible averaging method).
\end{proof}

\section{Experimental results}
\label{sec:experiments}

\begin{table*}[t]
\caption{Log loss (MLE) results obtained using standard Weka classifiers (W)
  and the three calibration methods (VA, SVA, DIR) applied to the standard classifiers' outputs
  for the following Weka classifiers:
  J48, J48 Bagging, logistic regression (upper part)
  and na\"{\i}ve Bayes, neural networks, and SVM Platt (lower part).
  The best results for each pair (classifier,\,dataset) are in bold.}
\label{tab:log}
\medskip\hspace*{-1.6cm}   
{\small\begin{tabular}{l|cccc|cccc|cccc}
\multicolumn{1}{c}{}
  & \multicolumn{4}{c}{J48 (J)}
  & \multicolumn{4}{c}{J48 Bagging (JB)}
  & \multicolumn{4}{c}{logistic regression (LR)}\\
& W & VA & SVA & DIR & W & VA & SVA & DIR & W & VA & SVA & DIR\\
\hline
Australian & $\infty$ & \textbf{0.380} & 0.469 & $\infty$
  & \textbf{0.328} & 0.369 & 0.344 & $\infty$
  & 0.342 & \textbf{0.340} & 0.340 & $\infty$\\
Breast & $\infty$ & \textbf{0.607} & 0.642 & $\infty$
  & \textbf{0.581} & 0.592 & 0.636 & $\infty$
  & 0.584 & \textbf{0.567} & 0.586 & $\infty$\\
Diabetes & $\infty$ & \textbf{0.552} & 0.635 & $\infty$
  & \textbf{0.504} & 0.515 & 0.561 & $\infty$
  & 0.492 & \textbf{0.490} & 0.491 & $\infty$\\
Echo & $\infty$ & \textbf{0.606} & 0.670 & $\infty$
  & 0.556 & \textbf{0.517} & 0.563 & $\infty$
  & $\infty$ & \textbf{0.589} & 0.606 & $\infty$\\
Hepatitis & $\infty$ & \textbf{0.491} & 0.528 & $\infty$
  & \textbf{0.420} & 0.456 & 0.434 & $\infty$
  & $\infty$ & \textbf{0.393} & 0.504 & $\infty$\\
Ionosphere & $\infty$ & \textbf{0.383} & 0.410 & $\infty$
  & $\infty$ & 0.387 & \textbf{0.251} & $\infty$
  & $\infty$ & \textbf{0.387} & 0.524 & $\infty$\\
Labor & $\infty$ & \textbf{0.503} & 0.537 & $\infty$
  & 0.427 & 0.427 & \textbf{0.385} & $\infty$
  & 1.927 & 0.687 & \textbf{0.297} & $\infty$\\
Liver & $\infty$ & \textbf{0.662} & 0.866 & $\infty$
  & \textbf{0.609} & 0.635 & 0.707 & $\infty$
  & 0.619 & 0.622 & \textbf{0.611} & $\infty$\\
Vote & $\infty$ & \textbf{0.134} & 0.145 & $\infty$
  & 0.135 & 0.159 & \textbf{0.131} & $\infty$
  & 1.059 & 0.188 & \textbf{0.148} & $\infty$
\end{tabular}}
\\[2mm]\hspace*{-1.6cm}  
{\small\begin{tabular}{l|cccc|cccc|cccc}
\multicolumn{1}{c}{}
 & \multicolumn{4}{c}{na\"{\i}ve Bayes (NB)}
 & \multicolumn{4}{c}{neural networks (NN)}
 & \multicolumn{4}{c}{SVM Platt (SVM)}\\
 & W & VA & SVA & DIR & W & VA & SVA & DIR & W & VA & SVA & DIR\\
\hline
Australian & 0.839 & \textbf{0.355} & 0.367 & $\infty$
  & 0.557 & \textbf{0.427} & 0.450 & $\infty$
  & 0.391 & 0.356 & \textbf{0.351} & $\infty$ \\
Breast & 0.663 & 0.563 & \textbf{0.551} & $\infty$
  & 0.774 & \textbf{0.615} & 0.738 & $\infty$
  & 0.583 & \textbf{0.568} & 0.582 & $\infty$ \\
Diabetes & 0.753 & \textbf{0.495} & 0.508 & $\infty$
  & 0.536 & \textbf{0.500} & 0.519 & $\infty$
  & 0.491 & 0.497 & \textbf{0.490} & $\infty$ \\
Echo & 0.658 & \textbf{0.505} & 0.522 & $\infty$
  & 0.770 & \textbf{0.578} & 0.605 & $\infty$
  & 0.558 & \textbf{0.495} & 0.538 & $\infty$ \\
Hepatitis & 0.936 & \textbf{0.365} & 0.372 & $\infty$
  & 0.753 & \textbf{0.471} & 0.484 & $\infty$
  & 0.435 & \textbf{0.349} & 0.404 & $\infty$ \\
Ionosphere & 0.704 & 0.262 & \textbf{0.227} & $\infty$
  & 0.625 & 0.427 & \textbf{0.379} & $\infty$
  & 0.359 & \textbf{0.250} & 0.333 & $\infty$ \\
Labor & 1.854 & 0.410 & \textbf{0.296} & $\infty$
  & 0.325 & 0.560 & \textbf{0.298} & $\infty$
  & 3.643 & 0.364 & \textbf{0.287} & $\infty$ \\
Liver & 0.727 & 0.649 & 0.661 & $\infty$
  & 0.642 & \textbf{0.603} & 0.615 & $\infty$
  & 0.645 & 0.663 & \textbf{0.639} & $\infty$ \\
Vote & 0.594 & 0.218 & \textbf{0.211} & $\infty$
  & 0.235 & 0.229 & \textbf{0.158} & $\infty$
  & 0.125 & 0.211 & \textbf{0.121} & $\infty$
\end{tabular}}
\end{table*}

\begin{table*}[t]
\caption{The analogue of Table~\ref{tab:log} for square loss (RMSE).}
\label{tab:sq}
\medskip\hspace*{-1.6cm}   
{\small\begin{tabular}{l|cccc|cccc|cccc}
\multicolumn{1}{c}{}
  & \multicolumn{4}{c}{J48 (J)}
  & \multicolumn{4}{c}{J48 Bagging (JB)}
  & \multicolumn{4}{c}{logistic regresion (LR)} \\
& W & VA & SVA & DIR & W & VA & SVA & DIR & W & VA & SVA & DIR \\
\hline
Australian & 0.366 & \textbf{0.346} & 0.359 & 0.366
  & \textbf{0.313} & 0.338 & 0.318 & 0.323
  & \textbf{0.317} & 0.319 & 0.319 & 0.321 \\
Breast & 0.472 & \textbf{0.453} & 0.463 & 0.473
  & \textbf{0.443} & 0.451 & 0.460 & 0.474
  & 0.442 & \textbf{0.437} & 0.444 & 0.450 \\
Diabetes & 0.449 & \textbf{0.431} & 0.443 & 0.449
  & \textbf{0.407} & 0.415 & 0.420 & 0.427
  & \textbf{0.399} & 0.401 & 0.401 & 0.402 \\
Echo & 0.478 & \textbf{0.456} & 0.460 & 0.482
  & 0.427 & \textbf{0.417} & 0.423 & 0.444
  & 0.457 & \textbf{0.443} & 0.446 & 0.475 \\
Hepatitis & 0.407 & \textbf{0.393} & 0.401 & 0.419
  & \textbf{0.362} & 0.390 & 0.368 & 0.391
  & 0.400 & \textbf{0.357} & 0.384 & 0.411 \\
Ionosphere & 0.318 & 0.355 & \textbf{0.312} & 0.318
  & 0.267 & 0.356 & \textbf{0.261} & 0.267
  & 0.357 & 0.363 & \textbf{0.349} & 0.361 \\
Labor & 0.407 & 0.403 & \textbf{0.402} & 0.413
  & 0.361 & 0.371 & \textbf{0.339} & 0.341
  & 0.294 & 0.498 & \textbf{0.287} & 0.303 \\
Liver & 0.528 & \textbf{0.482} & 0.518 & 0.528
  & \textbf{0.457} & 0.478 & 0.478 & 0.493
  & 0.460 & 0.463 & \textbf{0.458} & 0.461 \\
Vote & 0.187 & 0.186 & \textbf{0.186} & 0.187
  & 0.187 & 0.206 & \textbf{0.186} & 0.188
  & 0.198 & 0.233 & \textbf{0.195} & 0.203 \\
\hline
\end{tabular}}
\\[2mm]\hspace*{-1.6cm}  
{\small\begin{tabular}{l|cccc|cccc|cccc}
\multicolumn{1}{c}{}
  & \multicolumn{4}{c}{na\"{\i}ve Bayes (NB)}
  & \multicolumn{4}{c}{neural networks (NN)}
  & \multicolumn{4}{c}{SVM Platt (SVM)} \\
& W & VA & SVA & DIR & W & VA & SVA & DIR & W & VA & SVA & DIR \\
\hline
Australian & 0.392 & \textbf{0.328} & 0.333 & 0.335
  & \textbf{0.360} & 0.363 & 0.361 & 0.371
  & 0.343 & \textbf{0.324} & 0.325 & 0.327 \\
Breast & 0.449 & 0.436 & \textbf{0.427} & 0.433
  & 0.485 & \textbf{0.465} & 0.491 & 0.508
  & 0.443 & \textbf{0.431} & 0.442 & 0.447 \\
Diabetes & 0.420 & \textbf{0.406} & 0.410 & 0.413
  & 0.413 & \textbf{0.408} & 0.413 & 0.417
  & 0.399 & \textbf{0.393} & 0.400 & 0.402 \\
Echo & 0.428 & \textbf{0.408} & 0.412 & 0.426
  & 0.457 & \textbf{0.436} & 0.443 & 0.468
  & \textbf{0.416} & 0.427 & 0.418 & 0.431 \\
Hepatitis & 0.357 & 0.339 & \textbf{0.335} & 0.342
  & 0.396 & 0.402 & \textbf{0.379} & 0.427
  & 0.350 & \textbf{0.350} & 0.353 & 0.364 \\
Ionosphere & 0.281 & 0.273 & \textbf{0.250} & 0.251
  & 0.321 & 0.378 & \textbf{0.316} & 0.333
  & 0.312 & \textbf{0.309} & 0.312 & 0.316 \\
Labor & \textbf{0.256} & 0.363 & 0.284 & 0.281
  & \textbf{0.279} & 0.442 & 0.293 & 0.307
  & \textbf{0.274} & 0.358 & 0.280 & 0.283 \\
Liver & 0.480 & \textbf{0.476} & 0.478 & 0.487
  & 0.459 & \textbf{0.456} & 0.456 & 0.463
  & 0.473 & 0.477 & \textbf{0.472} & 0.477 \\
Vote & 0.292 & 0.257 & 0.251 & \textbf{0.250}
  & 0.216 & 0.271 & \textbf{0.206} & 0.227
  & \textbf{0.183} & 0.191 & 0.185 & 0.188 \\
\end{tabular}}
\end{table*}

In this section we compare various calibration methods discussed so far
by applying them to six standard scoring classifiers
(we will usually omit ``scoring'' in this section)
available within Weka \cite{weka:2011},
a machine learning tool developed at the University of Waikato, NZ.
The standard classifiers are J48 decision trees (abbreviated to J48, or even to J),
J48 decision trees with bagging (J48 Bagging, or JB),
logistic regression (LR), na\"{\i}ve Bayes (NB), neural networks (NN),
and support vector machines calibrated using a sigmoid function
as defined by Platt \cite{platt:2000} (SVM Platt, or simply SVM).
Each of these standard classifiers produces scores in the interval $[0,1]$,
which can then be used as probabilistic predictions;
however, in most previous studies these have been found to be inaccurate
(see \cite{zadrozny/elkan:2002} and \cite{langford/zadrozny:2005}).
We use the scores generated by classifiers as inputs,
and by applying the DIR (defined in Section~\ref{sec:VAP}),
Venn--Abers (VA), and simplified Venn--Abers (SVA, see below) methods
we investigate how well we can calibrate the scores
and improve them in their role as probabilistic predictions.

In the set of experiments described in this section
we do not perform a direct comparison to the method developed by Langford and Zadrozny
\cite{langford/zadrozny:2005}
primarily because, as far as we are aware,
the algorithms described in their work are not publicly available.

For the purpose of comparison we use a total of nine datasets with binary labels (encoded as 0 or 1)
obtained from the UCI machine learning repository \cite{UCI:data}:
Australian Credit (which we abbreviate to Australian),
Breast Cancer (Breast), Diabetes, Echocardiogram (Echo), Hepatitis,
Ionosphere, Labor Relations (Labor), Liver Disorders (Liver), and Congressional Voting (Vote).
The datasets vary in size as well as the number and type of attributes
in order to give a reasonable range of conditions encountered in practice.

In our comparison we use the two standard loss functions
discussed in the previous section.
Namely, on a given test sequence of length $n$
we will calculate the \emph{mean log error} (MLE)
\begin{equation}\label{eq:MLE}
  \frac1n\sum_{i=1}^n\lambda_{\ln}(p_i,y_i)
\end{equation}
and the \emph{root mean square error} (RMSE)
\begin{equation}\label{eq:RMSE}
  \sqrt{\frac1n\sum_{i=1}^n\lambda_{\sq}(p_i,y_i)},
\end{equation}
where $p_i$ is the probabilistic prediction for the label $y_i$
of the $i$th observation in the test sequence.
MLE \eqref{eq:MLE} can be infinite,
namely when predicting $p_i\in\{0,1\}$ while being incorrect.
It therefore penalises the overly confident probabilistic predictions
much more significantly than RMSE.
We compare the performance of the standard classifiers
with their versions calibrated using the three methods (VA, SVA, and DIR)
under both loss functions for each dataset.
In each experiment we randomly permute the dataset
and use the first $2/3$ observations for training and the remaining $1/3$ for testing.

One of the potential drawbacks of the Venn--Abers method is its computational inefficiency:
for each test object the scores have to be recalculated
for the training sequence extended by including the test object
first labelled as 0 and then labelled as 1.
This implies that the total calculation time
is at least $2n$ times that of the underlying classifier,
where $n$ is the number of test observations.
Therefore, we define a simplified version of Venn--Abers predictors,
for which the scores are calculated only once
without recalculating them for each test object with postulated labels~0 and~1.

In detail, the \emph{simplified Venn--Abers predictor} for a given scoring classifier
is defined as follows.
Let $(z_1,\ldots,z_l)$ be a training sequence
and $x$ be a test object.
Define $s$ to be the scoring function for $(z_1,\ldots,z_l)$,
$g_0$ to be the isotonic calibrator for
\begin{equation*}
  \bigl(
    (s(x_1),y_1),\ldots,(s(x_l),y_l),(s(x),0)
  \bigr),
\end{equation*}
and $g_1$ to be the isotonic calibrator for
\begin{equation*}
  \bigl(
    (s(x_1),y_1),\ldots,(s(x_l),y_l),(s(x),1)
  \bigr)
\end{equation*}
(cf.~\eqref{eq:ic-1} and~\eqref{eq:ic-2}).
The multiprobabilistic prediction output for the label of $x$
by the simplified Venn--Abers (SVA) predictor
is $(p_0,p_1)$, where $p_0:=g_0(s(x))$ and $p_1:=g_1(s(x))$.
This method, summarized as Algorithm~\ref{alg:SVA},
is intermediate between DIR and the Venn--Abers method.

\begin{algorithm}[bt]
  \caption{Simplified Venn--Abers predictor}
  \label{alg:SVA}
  \begin{algorithmic}
    \renewcommand{\algorithmicrequire}{\textbf{Input:}}
    \renewcommand{\algorithmicensure}{\textbf{Output:}}
    \REQUIRE training sequence $(z_1,\ldots,z_l)$
    \REQUIRE test object $x$
    \ENSURE multiprobabilistic prediction $(p_0,p_1)$ %
    \FOR{$y\in\{0,1\}$}
      \STATE set $s$ to the scoring function for
        $(z_1,\ldots,z_l)$
      \STATE set $g_y$ to the isotonic calibrator for
        \STATE \qquad $(s(x_1),y_1),\ldots,(s(x_l),y_l),(s(x),y)$
      \STATE set $p_y:=g_y(s(x))$
    \ENDFOR
  \end{algorithmic}
\end{algorithm}

Notice that Lemma~\ref{lem:finite-log} continues to hold for SVA predictors;
therefore, they never suffer infinite loss even under the log loss function.
On the other hand,
the following proposition shows that SVA predictors can violate the property~\eqref{eq:validity}
of unbiasedness in the large;
in particular, they are not Venn predictors
(cf.\ Corollary~\ref{cor:validity}).

\begin{proposition}\label{prop:counterexample}
  There exists a simplified Venn--Abers predictor
  violating~\eqref{eq:validity} for some $l$.
\end{proposition}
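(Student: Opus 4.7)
The plan is to produce an explicit scoring classifier together with a concrete distribution on observations for which the SVA predictor's expected probability interval sits strictly below $\Prob(Y=1)$. The driving intuition is that, unlike the full Venn--Abers predictor, SVA is free to treat the test object very asymmetrically because its scoring function is trained without seeing $x$; if we can rig matters so that postulating the test label~$0$ yields an already-isotonic calibration dataset while postulating the test label~$1$ systematically forces a PAVA merge that dilutes the test's vote, then $\Expect p_1$ can be pulled strictly below $\Prob(Y=1)$.

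To realize this I would take $\mathbf{X}:=[0,1]$ and distribute each observation as $X\sim\mathrm{Uniform}[0,1]$ with an independent $\mathrm{Bernoulli}(1/2)$ label $Y$, fix $l:=3$, and choose the (artificial but admissible) scoring classifier whose trained scoring function on a training sequence $(z_1,\ldots,z_l)$ with pairwise distinct objects is $s(x_i):=y_i$ on training points and $s(x):=-1$ everywhere else. By continuity the test object almost surely lands alone at score $-1$ while the training points occupy scores $0$ and $1$ according to their labels. Writing $K$ for the number of $1$s in the training sequence and unfolding the PAVA recipe reviewed in the proof of Lemma~\ref{lem:must-be-well-known}, the $g_0$-data are already isotonic (labels $0,0,\ldots,0,1,\ldots,1$ at sorted scores $-1,0,\ldots,0,1,\ldots,1$), so $p_0\equiv 0$; in $g_1$ the test's $1$-label at $-1$ violates monotonicity against the $(l-K)$-point cell of $0$s at score $0$, forcing a single merge and yielding $p_1 = 1/(l-K+1)$ (the boundary case $K=l$ needs no merge and is consistent with this formula).

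Taking expectations under $K\sim\mathrm{Bin}(3,1/2)$ gives $\Expect\underline{V}=0$ and
\[
  \Expect\overline{V} \;=\; \Expect\!\left[\frac{1}{4-K}\right]
  \;=\; \sum_{k=0}^{3}\binom{3}{k}2^{-3}\frac{1}{4-k}
  \;=\; \frac{15}{32} \;<\; \frac{1}{2} \;=\; \Prob(Y=1),
\]
so \eqref{eq:validity} fails. The main obstacle is the first paragraph: the ``natural'' scoring classifiers one might first try (constant scores, nearest-neighbour scores, label-aligned scores on a two-point $\mathbf{X}$, etc.) all yield intervals wide enough to contain $\Prob(Y=1)$, so the construction genuinely requires a contrived scoring function that isolates the test object at one end of the score axis and then exploits the asymmetric response of PAVA to the two postulated labels; once such an $s$ is in hand, the rest is a short case analysis and a four-term binomial sum.
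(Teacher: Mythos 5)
Your construction is correct and rests on the same key idea as the paper's own proof: a contrived scoring classifier that assigns a distinguished extreme score to any object outside the training set, so that PAVA responds asymmetrically to the two postulated test labels and the expected probability interval misses $\Prob(Y=1)$. The differences are only in the instantiation --- the paper pushes the interval up to roughly $[2/3,1]$ using a large $l$ and labels correlated with the sign of $X$, argued informally ``with high probability'', whereas you pull it down to exactly $[0,15/32]$ at $l=3$ with independent Bernoulli$(1/2)$ labels, which is a fully explicit and equally valid variant.
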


\begin{proof}
  Let the object space be the real line, $\mathbf{X}:=\mathbb{R}$,
  and the probability distribution %
  generating independent observations $(X,Y)$
  be such that:
  the marginal distribution of $X$ is continuous;
  the probability that $X>0$ (and, therefore, the probability that $X<0$) is $1/2$;
  the probability that $Y=1$ given $X<0$ is $1/3$;
  the probability that $Y=1$ given $X>0$ is $2/3$.
  Therefore, $\Prob(Y=1)=1/2$.
  Let $l$ be a large number
  (we are using a somewhat informal language, but formalization will be obvious).
  Given a training set $(z_1,\ldots,z_l)$, where $z_i=(x_i,y_i)$ for all $i$,
  the scoring function $s$ is:
  $$
    s(x)
    :=
    \begin{cases}
      0 & \text{if $x\in\{x_1,\ldots,x_l\}$ and $x<0$}\\
      1 & \text{if $x\in\{x_1,\ldots,x_l\}$ and $x>0$}\\
      2 & \text{if $x\notin\{x_1,\ldots,x_l\}$}.
    \end{cases}
  $$
  It is easy to see that, with high probability,
  $$
    \underline{V} \approx 2/3,
    \quad
    \overline{V} = 1.
  $$
  Therefore, \eqref{eq:validity} is violated.
\end{proof}

Proposition~\ref{prop:counterexample} shows that SVA predictors are not always valid;
however, the construction in its proof is artificial,
and our hope is that they will be ``nearly valid'' in practice,
since they are a modification of provably valid predictors.

For each dataset/classifier combination,
we repeat the same experiment a total of 100 times
for standard classifiers (denoted W in the tables), SVA, and DIR
and 16 times for VA (because of the computational inefficiency of the latter)
and average the results.
The same 100 random splits into training and test sets are used for W, SVA, and DIR,
but for VA the 16 splits are different.

Tables~\ref{tab:log}--\ref{tab:sq} compare the overall losses
computed according to \eqref{eq:MLE}
(MLE, used in Table~\ref{tab:log})
and \eqref{eq:RMSE}
(RMSE, used in Table~\ref{tab:sq})
for probabilities generated
by the standard classifiers as implemented in Weka (W)
and the corresponding Venn--Abers (VA),
simplified Venn--Abers (SVA), and direct isotonic-regression (DIR) predictors.
The values in bold indicate the lowest of the four losses for each dataset/classifier combination.
The column titles mention both fuller and shorter names for the six standard classifiers;
the short name ``SVM'' is especially appropriate when using VA, SVA, and DIR,
in which case the application of the sigmoid function in Platt's method is redundant.
The three entries of $\infty$ in the column W for logistic regression of Table~\ref{tab:log}
come out as infinities in our experiments only because of the limited machine accuracy:
logistic regression sometimes outputs probabilistic predictions
that are so close to 0 or 1 that they are rounded to 0 or 1, respectively, by hardware.

In the case of MLE,
the VA and SVA methods improve the predictive performance
of the majority of the standard classifiers on most datasets.
A major exception is J48 Bagging.
The application of bagging to J48 decision trees improves the calibration significantly
as bagging involves averaging over different training sets
in order to reduce the underlying classifier's instability.
The application of VA and SVA to J48 Bagging
is not found to improve the log or square loss significantly.
What makes VA and SVA useful is that for many data sets other classifiers,
less well calibrated than J48 Bagging,
provide more useful scores.

In the case of RMSE,
the application of VA and SVA also often improves probabilistic predictions.

\begin{table*}
\caption{The ranking of the best three methods (among W, VA, SVA, and DIR)
  for each dataset according to the two loss functions (see the text for details).}
\label{tab:summary}
\hspace*{-0.5cm}
\begin{tabular}{l|ll}
            & \multicolumn{1}{c}{log loss}  & \multicolumn{1}{c}{square loss} \\
\hline
Australian  & W (JB), VA (LR), SVA (LR)     & W (JB), SVA (JB), VA (LR) \\
Breast      & SVA (NB), VA (NB), W (JB)     & SVA (NB), VA (SVM), DIR (NB) \\
Diabetes    & VA (LR), SVA (SVM), W (SVM)   & VA (SVM), W (LR), SVA (SVM) \\
Echo        & VA (SVM), SVA (NB), W (JB)    & VA (NB), SVA (NB), W (SVM) \\
Hepatitis   & VA (SVM), SVA (NB), W (JB)    & SVA (NB), VA (NB), DIR (NB) \\
Ionosphere  & SVA (NB), VA (SVM), W (SVM)   & SVA (NB), DIR (NB), W (JB) \\
Labor       & SVA (SVM), W (NN), VA (SVM)   & W (NB), SVA (SVM), DIR (NB) \\
Liver       & VA (NN), W (JB), SVA (LR)     & VA (NN), SVA (NN), W (JB) \\
Vote        & SVA (SVM), W (SVM), VA (J)    & W (SVM), SVA (SVM), VA (J) \\
\end{tabular}
\end{table*}

Whereas in the case of square loss the DIR method
often produces values comparable to VA and SVA,
under log loss this method fares less well
(which is not obvious from \cite{zadrozny/elkan:2002},
which only uses square loss).
In all our experiments DIR suffers infinite log loss for at least one test observation,
which makes the overall MLE infinite.
There are modifications of the DIR method preventing probabilistic predictions in $\{0,1\}$
(such as those mentioned in \cite{niculescu-mizil/caruana:2012}, Section~3.3),
but they are somewhat arbitrary.

Table~\ref{tab:summary} ranks,
for each loss function and dataset,
the four calibration methods: W (none), VA (Venn--Abers), SVA (simplified Venn--Abers),
and DIR (direct isotonic regression).
Only the first three methods are given
(the best, the second best, and the second worst),
where the quality of a method is measured by the performance
of the best underlying classifier
(indicated in parentheses using the abbreviations
given in the column titles of Tables~\ref{tab:log}--\ref{tab:sq})
for the given method, data set, and loss function.
Notice that we are ranking the four calibration methods
rather than the 24 combinations of Weka classifiers with calibration methods
(e.g., were we ranking the 24 combinations,
the entry for log loss and Australian would remain the same
but the next entry, for log loss and Breast, would become
``SVA (NB), VA (NB), VA (LR)'').

For MLE, the best algorithm is VA or SVA for 8 data sets out of 9;
for RMSE this is true for 6 data sets out of 9.
In all other cases the best algorithm is W rather than DIR.
(And as discussed earlier,
in the case of log loss the performance of DIR is especially poor.)
Therefore, it appears that the most interesting comparisons are between W and VA
and between W and SVA.

What is interesting is that VA and SVA perform best on equal numbers of datasets,
4 each in the case of MLE and 3 each in the case of RMSE,
despite the theoretical guarantees of validity for the former method
(such as Theorem~\ref{thm:validity}).
The similar performance of the two methods needs to be confirmed
in more extensive empirical studies, but if it is,
SVA will be a preferable method because of its greater computational efficiency.

Comparing W and SVA,
we can see that SVA performs better than W on 7 data sets out of 9 for MLE,
and on 5 data sets out of 9 for RMSE.
And comparing W and VA,
we can see that VA performs better than W on 6 data sets out of 9 for MLE,
and on 5 data sets out of 9 for RMSE.
This suggests that SVA might be an improvement of VA not only in computational
but also in predictive efficiency (but the evidence for this is very slim).

\section{Conclusion}

This paper has introduced a new class of Venn predictors
thereby extending the domain of applicability of the method.
Our experimental results suggest that the Venn--Abers method
can potentially lead to better calibrated probabilistic predictions
for a variety of datasets and standard classifiers.
The method seems particularly suitable in cases
where alternative probabilistic predictors produce overconfident
but erroneous predictions
under an unbounded loss function such as log loss.
In addition,
the results suggest that an alternative simplified Venn--Abers method
can yield similar results while retaining computational efficiency. 

Unlike the previous methods for improving
the calibration of probabilistic predictors,
Venn--Abers predictors enjoy theoretical guarantees of validity
(shared with other Venn predictors).

\ifFULL\bluebegin
  Another interesting open problem is to extend these results to multiclass prediction.
  Even if this fails, solving the following problem would allow us
  to obtain multiclass probabilistic predictors from scoring binary predictors.
  (For simplicity I discuss only the one-against-the-rest approach,
  but a similar question can be asked for the one-against-one
  and more general approaches,
  as in Section~4 of \cite{zadrozny/elkan:2002}.)

  Let $\mathbf{Y}$ be a finite set,
  $\bbbr^{\mathbf{Y}}$ be the set of all real-valued function on $\mathbf{Y}$,
  and $\Delta_{\mathbf{Y}}$ be the set of all probability distributions on $\mathbf{Y}$.
  Fix a training sequence $z_i = (x_i,y_i)$, $i=1,\ldots,l$,
  and a function $D:\mathbf{X}\to\bbbr^{\mathbf{Y}}$.
  The values $D(x)(y)$ of $D$, where $x\in\mathbf{X}$ and $y\in\mathbf{Y}$,
  and interpreted as the score produced on $x$
  by the binary classifier separating the label $y$ against the rest
  (large values of $D(x)(y)$ corresponding to the label $y$ being more likely).
  A function $\phi:\bbbr^{\mathbf{Y}}\to\Delta_{\mathbf{Y}}$ is \emph{isotonic}
  if, for all $A\subseteq\mathbf{Y}$ and all $d_1,d_2\in\bbbr^{\mathbf{Y}}$
  such that $d_1(y)\ge d_2(y)$ for all $y\in A$
  and $d_1(y)\le d_2(y)$ for all $y\in\mathbf{Y}\setminus A$,
  we have $\phi(d_1)(A)\ge\phi(d_2)(A)$.
  \textbf{Problem:}
  Find an isotonic $\phi:\bbbr^{\mathbf{Y}}\to\Delta_{\mathbf{Y}}$
  that maximizes
  $$
    \prod_{i=1}^l
    \phi(D(x_i))\{y_i\}.
  $$
\blueend\fi

\subsection*{Acknowledgments}

Thanks to the reviewers for helpful comments,
which prompted us to state explicitly Proposition~\ref{prop:counterexample}
and add several clarifications.
In our experiments in Section~\ref{sec:experiments}
we used the R language \cite{r};
in particular, we used the implementation of the PAVA in the standard R package \texttt{stats}
(namely, the function \texttt{isoreg}).
The first author has been partially supported by EPSRC (grant EP/K033344/1).


\begin{thebibliography}{10}

\bibitem{ayer/etal:1955}
Miriam Ayer, H.~Daniel Brunk, George~M. Ewing, W.~T. Reid, and Edward
  Silverman.
\newblock An empirical distribution function for sampling with incomplete
  information.
\newblock {\em Annals of Mathematical Statistics}, 26:641--647, 1955.

\bibitem{barlow/etal:1972}
Richard~E. Barlow, D.~J. Bartholomew, J.~M. Bremner, and H.~Daniel Brunk.
\newblock {\em Statistical Inference under Order Restrictions: The Theory and
  Application of Isotonic Regression}.
\newblock Wiley, London, 1972.

\bibitem{cox/hinkley:1974}
David~R. Cox and David~V. Hinkley.
\newblock {\em Theoretical Statistics}.
\newblock Chapman and Hall, London, 1974.

\bibitem{dawid:ESS2006PF}
A.~Philip Dawid.
\newblock Probability forecasting.
\newblock In Samuel Kotz, N.~Balakrishnan, Campbell~B. Read, Brani Vidakovic,
  and Norman~L. Johnson, editors, {\em Encyclopedia of Statistical Sciences},
  volume~10, pages 6445--6452. Wiley, Hoboken, NJ, second edition, 2006.

\bibitem{UCI:data}
A.~Frank and A.~Asuncion.
\newblock {UCI} machine learning repository, 2010.

\bibitem{weka:2011}
Mark Hall, Eibe Frank, Geoffrey Holmes, Bernhard Pfahringer, Peter Reutemann,
  and Ian~H. Witten.
\newblock The {WEKA} data mining software: an update.
\newblock {\em SIGKDD Explorations}, 11:10--18, 2011.

\bibitem{jiang/etal:2011}
Xiaoqian Jiang, Melanie Osl, Jihoon Kim, and Lucila Ohno-Machado.
\newblock Smooth isotonic regression: a new method to calibrate predictive
  models.
\newblock {\em AMIA Summits on Translational Science Proceedings}, 2011:16--20,
  2011.

\bibitem{lambrou/etal:2012}
Antonis Lambrou, Harris Papadopoulos, Ilia Nouretdinov, and Alex Gammerman.
\newblock Reliable probability estimates based on support vector machines for
  large multiclass datasets.
\newblock In Lazaros Iliadis, Ilias Maglogiannis, Harris Papadopoulos, Kostas
  Karatzas, and Spyros Sioutas, editors, {\em Proceedings of the AIAI 2012
  Workshop on Conformal Prediction and its Applications}, volume 382 of {\em
  IFIP Advances in Information and Communication Technology}, pages 182--191,
  Berlin, 2012. Springer.

\bibitem{langford/zadrozny:2005}
John Langford and Bianca Zadrozny.
\newblock Estimating class membership probabilities using classifier learners.
\newblock In Robert~G. Cowell and Zoubin Ghahramani, editors, {\em Proceedings
  of the Tenth International Workshop on Artificial Intelligence and
  Statistics}, pages 198--205. Society for Artificial Intelligence and
  Statistics, 2005.

\bibitem{lehmann:1986}
Erich~L. Lehmann.
\newblock {\em Testing Statistical Hypotheses}.
\newblock Springer, New York, second edition, 1986.

\bibitem{murphy/epstein:1967}
Allan~H. Murphy and Edward~S. Epstein.
\newblock Verification of probabilistic predictions: a brief review.
\newblock {\em Journal of Applied Meteorology}, 6:748--755, 1967.

\bibitem{niculescu-mizil/caruana:2012}
Alexandru Niculescu-Mizil and Rich Caruana.
\newblock Obtaining calibrated probabilities from boosting.
\newblock Technical Report~{\tt arXiv:1207.1403 [cs.LG]}, {\tt arXiv.org}
  e-Print archive, July 2012.

\bibitem{platt:2000}
John~C. Platt.
\newblock Probabilities for {SV} machines.
\newblock In Alexander~J. Smola, Peter~L. Bartlett, Bernhard Sch\"olkopf, and
  Dale Schuurmans, editors, {\em Advances in Large Margin Classifiers}, pages
  61--74. MIT Press, 2000.

\bibitem{r}
{R Core Team}.
\newblock {\em R: A Language and Environment for Statistical Computing}.
\newblock R Foundation for Statistical Computing, Vienna, Austria, 2014.

\bibitem{vovk/etal:2005book}
Vladimir Vovk, Alex Gammerman, and Glenn Shafer.
\newblock {\em Algorithmic Learning in a Random World}.
\newblock Springer, New York, 2005.

\bibitem{vovk/etal:2004NIPS}
Vladimir Vovk, Glenn Shafer, and Ilia Nouretdinov.
\newblock Self-calibrating probability forecasting.
\newblock In Sebastian Thrun, Lawrence~K. Saul, and Bernhard Sch\"olkopf,
  editors, {\em Advances in Neural Information Processing Systems 16}, pages
  1133--1140, Cambridge, MA, 2004. MIT Press.

\bibitem{zadrozny/elkan:2002}
Bianca Zadrozny and Charles Elkan.
\newblock Transforming classifier scores into accurate multiclass probability
  estimates.
\newblock In David Hand, Daniel Keim, and Raymond Ng, editors, {\em Proceedings
  of the Eighth ACM SIGKDD International Conference on Knowledge Discovery and
  Data Mining}, pages 694--699, New York, 2002. ACM Press.
\end{thebibliography}
\end{document}